\newtheorem{prop}{Proposition}
\theoremstyle{plain}
\theoremstyle{definition}
\theoremstyle{remark}
\icmltitlerunning{Submission and Formatting Instructions for ICML 2022}
\begin{document}

\twocolumn[
\icmltitle{Communicating via Markov Decision Processes}



\icmlsetsymbol{equal}{*}

\begin{icmlauthorlist}
\icmlauthor{Samuel Sokota}{equal,cmu}
\icmlauthor{Christian Schroeder de Witt}{equal,oxf}
\icmlauthor{Maximilian Igl}{waymo}
\icmlauthor{Luisa Zintgraf}{oxf}
\icmlauthor{Philip Torr}{oxf}
\icmlauthor{Martin Strohmeier}{arma}
\icmlauthor{J. Zico Kolter}{cmu,bosch}
\icmlauthor{Shimon Whiteson}{oxf,waymo}
\icmlauthor{Jakob Foerster}{oxf}
\end{icmlauthorlist}

\icmlaffiliation{cmu}{Carnegie Mellon University}
\icmlaffiliation{oxf}{Oxford University}
\icmlaffiliation{waymo}{Waymo Research}
\icmlaffiliation{arma}{armasuisse Science + Technology}
\icmlaffiliation{bosch}{Bosch Center for AI}

\icmlcorrespondingauthor{Samuel Sokota}{ssokota@andrew.cmu.edu}
\icmlcorrespondingauthor{Christian Schroeder de Witt}{schroederdewitt@gmail.com}

\icmlkeywords{Machine Learning, ICML}

\vskip 0.3in
]



\printAffiliationsAndNotice{\icmlEqualContribution} 

\begin{abstract}
We consider the problem of communicating exogenous information by means of Markov decision process trajectories. This setting, which we call a Markov coding game (MCG), generalizes both source coding and a large class of referential games. 
MCGs also isolate a problem that is important in decentralized control settings in which cheap-talk is not available---namely, they require balancing communication with the associated cost of communicating.
We contribute a theoretically grounded approach to MCGs based on maximum entropy reinforcement learning and minimum entropy coupling that we call MEME.
Due to recent breakthroughs in approximation algorithms for minimum entropy coupling, MEME is not merely a theoretical algorithm, but can be applied to practical settings.
Empirically, we show both that MEME is able to outperform a strong baseline on small MCGs and that MEME is able to achieve strong performance on extremely large MCGs. 
To the latter point, we demonstrate that MEME is able to losslessly communicate binary images via trajectories of Cartpole and Pong, while simultaneously achieving the maximal or near maximal expected returns, and that it is even capable of performing well in the presence of actuator noise.
\end{abstract}

\section{Introduction}

This work introduces a novel problem setting called Markov coding games (MCGs).
MCGs are two-player decentralized Markov decision processes \citep{oliehoek_concise_2016} that proceed in four steps.
In the first step, one agent (called the sender) receives a special private observation (called the message), which it is tasked with communicating.
In the second step, the sender plays out an episode of a Markov decision process (MDP).
In the third, the other agent (called the receiver) receives the sender's MDP trajectory as its observation.
In the fourth, the receiver estimates the message.
The shared payoff to the sender and receiver is a weighted sum of the cumulative reward yielded by the MDP and an indicator specifying whether or not the receiver correctly decoded the message.

Among the reasons that MCGs are of interest is the fact that they generalize other important settings.
The first of these is referential games.
In a referential game, a sender attempts to communicate a message to a receiver using cheap talk actions---i.e., communicatory actions that do not have externalities on the transition or reward functions.
Referential games have been a subject of academic interest dating back at least as far as Lewis's seminal work \textit{Convention} \citep{lewis_convention_1969}. 
Since then, various flavors of referential games have been studied in game theory \citep{referential-gt}, artificial life \citep{referential-alife}, evolutionary linguistics \citep{referential-evoling}, cognitive science \citep{referential-cogsci}, and machine learning \citep{lazaridou_emergence_2018}.
MCGs can be viewed as a generalization of referential games to a setting in which the sender's actions may incur costs.

A second problem setting generalized by MCGs is source coding \citep{information-theory} (also known as data compression).
In source coding, the objective is to construct an injective mapping from a space of messages to the set of sequences of symbols (for some finite set of symbols) such that the expected output length is minimized.
Source coding has a myriad of real world applications involving the compression of images, video, audio, and genetic data.
MCGs can be viewed as a generalization of the source coding problem to a setting where the cost of an encoding may involve complex considerations, rather than simply being equal to the sequence length.

Yet another reason to be interested in MCGs is that they isolate an important subproblem of decentralized control.
In particular, achieving good performance in an MCG requires the sender's actions to simultaneously perform control in an MDP and communicate information (i.e., to communicate implicitly).
This presents a challenge for approximate dynamic programming, the foundation for preeminent approaches to constructing control policies \citep{sutton_reinforcement_2018}, because the values of communication protocols depend on counterfactuals,
violating dynamic programming's locality assumption.

To address MCGs, we propose a theoretically grounded algorithm called MEME. 
MEME leverages a union of maximum entropy reinforcement learning (MaxEnt RL) \citep{ziebart_maximum_2008} and minimum entropy coupling (MEC) \citep{mec-hardness}. 
The key insight is that maximizing the returns of the MDP can be disentangled from learning a good communication protocol by means of observing that the entropy of a policy corresponds (in an informal sense) to its capacity to communicate. 
MEME leverages this insight in two steps. 
In the first step, MEME constructs a MaxEnt policy for the MDP, balancing between maximizing expected return and maximizing cumulative conditional entropy.
This MaxEnt policy is provided to both the sender and the receiver.
In the second step, at each decision point, MEME uses recent breakthroughs in MEC approximation techniques \cite{mec-appx} to pair messages with actions in such a way that the sender selects actions with the same probabilities as the MaxEnt RL policy (thereby guaranteeing the same expected return from the MDP) and the receiver’s uncertainty about the message is greedily reduced as much as possible.
After the sender completes its MDP episode, the receiver uses the MaxEnt RL policy and the sender's trajectory to compute the posterior over the message and guesses the maximum a posteriori.

To demonstrate the efficacy of MEME, we present experiments for MCGs based on a gridworld,
Cartpole, and Pong \citep{ale}, which we call CodeGrid, CodeCart, and CodePong, respectively. 
For CodeGrid, we show that with a message space cardinality in the 10s or 100s, MEME significantly outperforms a baseline combining a reinforcement learning with an optimal receiver.
For CodeCart and CodePong, we consider a message space of binary images and a uniform distribution over messages, meaning that a randomly guessing receiver has an astronomically small probability of guessing correctly.
Remarkably, we show that MEME is able to achieve an optimal expected return in Cartpole and Pong while simultaneously losslessly communicating images to the receiver, demonstrating that MEME has the capacity to be scaled to extremely large message spaces and complex control tasks.
Moreover, we find that the performance of MEME decays gracefully as the amount of actuator noise in the environment increases.

\section{Related Work}

The works that are most closely related to this one can be taxonomized as coming from literature on multi-agent reinforcement learning, coding, and diverse skill learning.

\paragraph{Multi-Agent Reinforcement Learning} One body of related work investigates multi-agent reinforcement learning settings in which the agents are endowed with explicit communication channels.
\citet{learning-to-comm} propose DIAL, which optimizes the sender's protocol by performing gradient ascent through the parameters of the receiver; \citet{ma-backprop} propose CommNet, which uses mean-pooling to process messages to handle dynamic number of agents over multiple steps;
\citet{tarmac} propose TarMAC, which also handles targeted messaging;
\citet{mao,imac,mao2} investigate handling settings in which the bandwidth of the communication channel is limited.
\citet{varun} propose an approach for discrete communicate channels in which the sender's behavioral policy deterministically selects the action that maximizes the receiver's posterior probability of the correct message, when computed using the target policy.
The problem setting we examine differs from this line of work in that the communication channel is not costless---instead, it possesses costs dictated by an MDP.
More similar to our work in this respect is that of \citet{strouse}, who investigate auxiliary rewards for taking actions with high mutual information.
The baseline for our CodeGrid experiments loosely resembles \citeauthor{strouse}'s algorithm.

\paragraph{Coding} Another body of related work concerns extensions of the source coding problem.
Length limited coding \citep{length-limited,kostina_2011} considers a problem setting in which the objective is to minimize the expected sequence length (as before), subject to a maximum length constraint.
Coding with unequal symbol costs \citep{unequal-letter-costs,channel-coding-with-costs} considers the problem in which the goal is to minimize the expected cumulative symbol cost of the sequence to which the message is mapped.
The cost of a symbol may differ from the cost of other symbols arbitrarily, making it a strictly more general problem setting than standard source coding (which can alternatively be thought of as minimizing cumulative symbol cost with equally costly symbols).
Both length limited coding and coding with unequal costs are subsumed by Markov coding games.
And while existing algorithms for both standard source coding and the extensions above are well-established and widely commercialized, they do not address the more general MCG setting.

MCGs are also related to finite state Markov channel settings \citep{fsmc} and Markov channel settings with feedback \citep{markov_feedback,markov_feedback2}.
In such settings, the fidelity of the channel by which the sender communicates to a receiver is controlled by a Markov process.
Another related setting is intersymbol interference, where the sender's previously selected symbols (i.e., actions) may cause interference with subsequently selected symbols, making them less likely to be faithfully transmitted to the receiver \citep{isi}.
MCGs differ from both Markov channel and intersymbol interference settings in that the Markov system controls the cost of the channel, rather than its transmission quality.
MCGs are more resemblant of a setting in which the channel is reliable, but subject to natural variation in costs, such as based on weather or third party usage, as well as variation based on the sender's own usage.

\paragraph{Diverse Skill Learning} A third area of related research is that of diverse skill learning \citep{diayn}.
\citet{diayn} propose an unsupervised learning method for discovering diverse, identifiable skills.
Their objective, called DIAYN, seeks to learn diverse, discriminable skills.
This paradigm resembles our work in the sense that skills can be interpreted as messages and discriminability can be interpreted as maximizing the mutual information between the skill and the state.
The baseline used in our CodeGrid experiments can also be viewed as an adaptation of an idealized version of DIAYN to the MCG setting.

\section{Background and Notation}

We will require the following background material.

\paragraph{Markov Decision Processes} To represent our task formalism, we use finite Markov decision processes (MDPs). 
We notate MDPs using tuples $\langle \mathcal{S}, \mathcal{A}, \mathcal{R}, \mathcal{T} \rangle$ where $\mathcal{S}$ is the set of states, $\mathcal{A}$ is the set of actions, $\mathcal{R} \colon \mathcal{S} \times \mathcal{A} \to \mathbb{R}$ is the reward function, and $\mathcal{T} \colon \mathcal{S} \times \mathcal{A} \to \Delta(\mathcal{S})$ is the transition function.
An agent's interactions with an MDP are dictated by a policy $\pi \colon \mathcal{S} \to \Delta(\mathcal{A})$ mapping states to distributions over actions.
We focus on episodic MDPs, meaning that after a finite number of transitions have occurred, the MDP will terminate.
The history of states and actions is notated using $h = (s^0, a^0, \dots, s^t)$, or $H = (S^0, A^0, \dots, S^t)$, if it is random.
We use the notation $\mathcal{R}(h) = \sum_j \mathcal{R}(s^j, a^j)$ to denote the amount of reward accumulated over the course of a history.
When a history is terminal, we use $z$ (or $Z$, if it is random) to notate it, rather than $h$.
The objective of an MDP is to construct a policy yielding a large cumulative reward in expectation $\mathbb{E}[\mathcal{R}(Z) \mid \pi]$.

\begin{figure}
    \centering
    \includegraphics[width=0.9\linewidth]{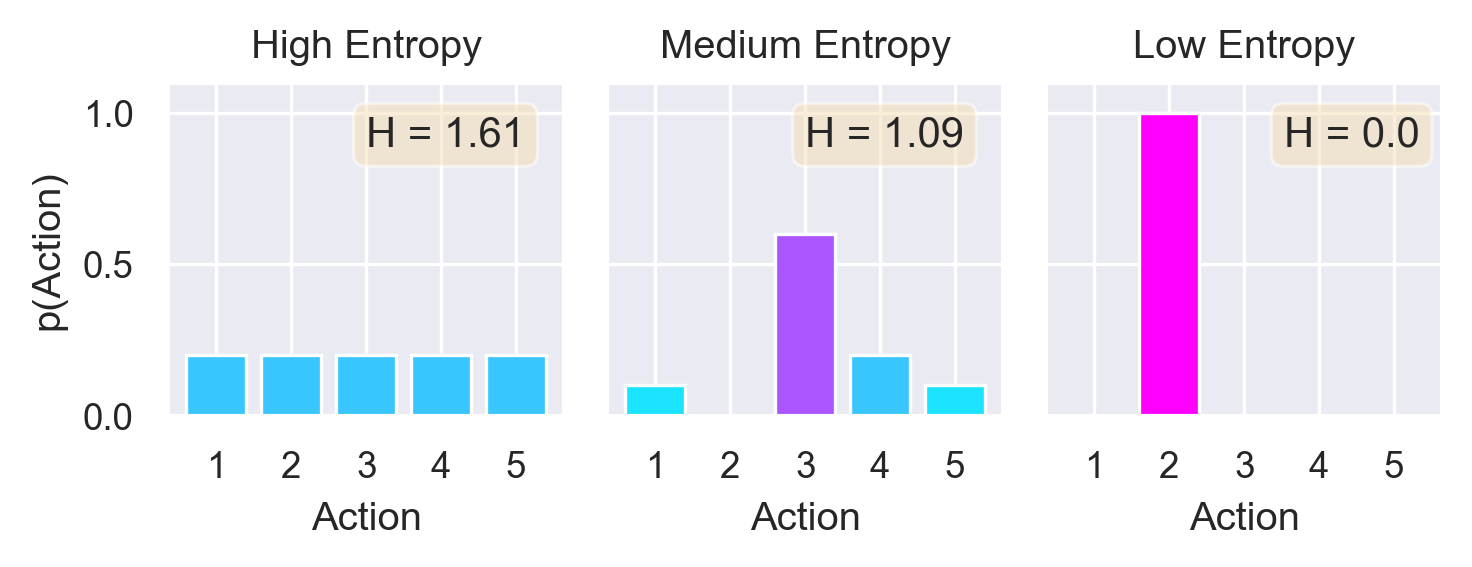}
    \caption{Three probability distributions over actions, in order of decreasing entropy (left to right).}
    \label{fig:entropy}
\end{figure}

\paragraph{Entropy} To help us quantify the idea of uncertainty, we use entropy.
The entropy of a random variable $X$ is $\mathcal{H}(X) = - \mathbb{E}\, \log \mathcal{P}(X)$.
It is maximized when the mass of $\mathcal{P}_{X}$ is spread as evenly as possible and minimized when the mass of $\mathcal{P}_{X}$ is concentrated at a point.

In the context of decision-making, entropy can be used to describe the uncertainty regarding which action will be taken by an agent (see Figure \ref{fig:entropy}).
When a policy spans multiple decision points, the uncertainty regarding the agent's actions given that the state is known is naturally described by conditional entropy.
Conditional entropy is the entropy of a random variable, conditioned upon the fact that the realization of another random variable is known.
More formally, conditional entropy is defined by $\mathcal{H}(X \mid Y) = \mathcal{H}(X, Y) - \mathcal{H}(Y)$
where the joint entropy $\mathcal{H}(X, Y) = - \mathbb{E} \, \log \mathcal{P}(X, Y)$ is defined as the entropy of $(X, Y)$ considered as a random vector.

In some contexts, it is desirable for a decision-maker's policy to be highly stochastic.
In such cases, an attractive alternative to the expected cumulative reward objective is the maximum entropy RL objective \cite{ziebart_maximum_2008}
$\max_{\pi} \mathbb{E}_{\pi} \left[\sum_t \mathcal{R}(S^t, A^t) + \alpha \mathcal{H}(A^t \mid S^t) \right]$,
which trades off between maximizing expected return and pursuing trajectories along which its actions have large cumulative conditional entropy, using the temperature hyperparameter $\alpha$.

\begin{figure}
    \centering
    \includegraphics[width=0.9\linewidth]{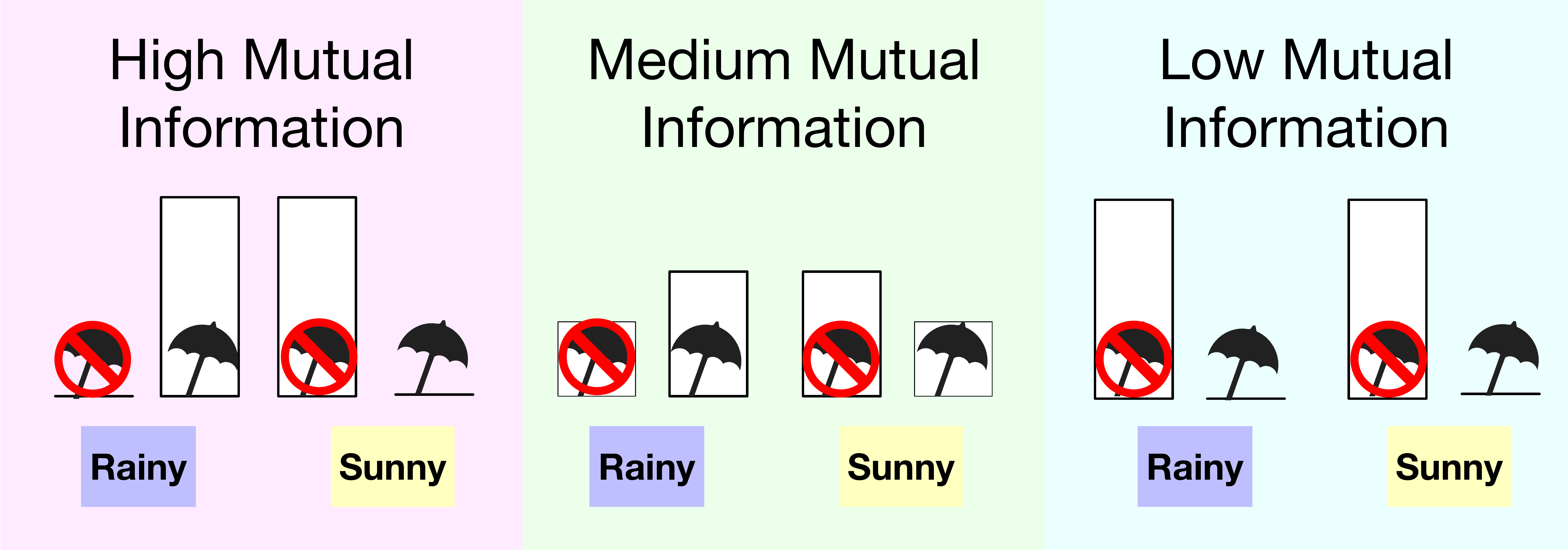}
    \caption{The mutual information between whether it rains and whether Alice (pink), Bob (green), and Charlie (teal) bring umbrellas, respectively.
    Alice always checks an unrealistically reliable weather station before leaving, so there is perfect mutual information.
    Bob makes a guess as to whether it will rain based on whether it is cloudy, so there is some mutual information.
    Charlie does not mind being rained on, so there is no mutual information.}
    \label{fig:mi}
\end{figure}

\paragraph{Mutual Information} A closely related concept to entropy is mutual information.
Mutual information describes the strength of the dependence between two random variables.
The greater the mutual information between two random variables, the more the outcome of one affects the conditional distribution of the other (see Figure \ref{fig:mi}).
Symbolically, mutual information is defined by 
$\mathcal{I}(X; Y) = \mathcal{H}(Y) {-} \mathcal{H}(Y \mid X) = \mathcal{H}(X) {-} \mathcal{H}(X \mid Y)$.
From this definition, we see explicitly that the mutual information of two random variables can be interpreted as the amount of uncertainty about one that is eliminated by observing the realization of the other.

Mutual information is important for communication because we may only be able to share the realization of an auxiliary random variable, rather than that of the random variable of interest.
In such cases, maximizing the amount of communicated information amounts to maximizing the mutual information between the auxiliary random variable and the random variable of interest.

\paragraph{The Data Processing Inequality} The independence relationships among random variables play an important role in determining their mutual information.
If random variables $X$ and $Z$ are conditionally independent given $Y$ (that is, $X \perp Z \mid Y$), the data processing inequality states that
$\mathcal{I}(X; Y) \geq \mathcal{I}(X; Z)$.
Less formally, the inequality states that if $Z$ does not provide additional information about $X$ given $Y$, then the dependence between $X$ and $Z$ cannot be stronger than the dependence between $X$ and $Y$.
The conditional independence required for the data processing inequality is depicted visually in Figure \ref{fig:dpe}.

\begin{figure}
    \centering
    \includegraphics[width=.7\linewidth]{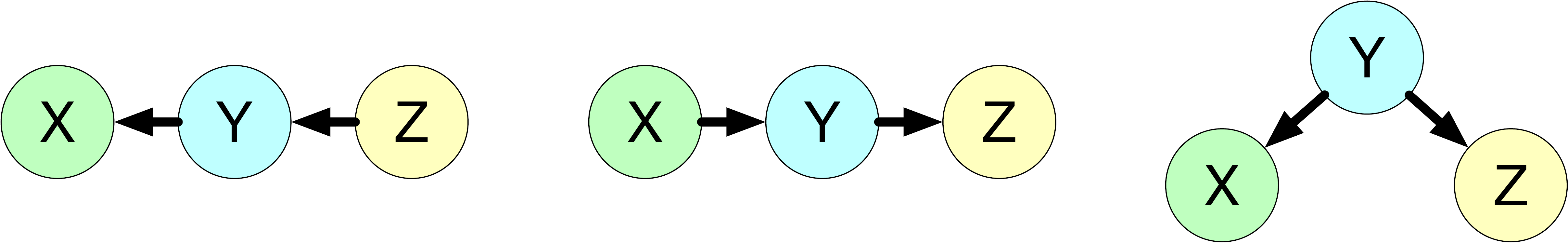}
    \caption{If any (or equivalently, all) of the graphs above are valid structures for a probabilistic graphical model for the joint distribution over $X, Y, Z,$ and there are no confounding variables, the data processing inequality states that the dependence between $X$ and $Y$ must be at least as strong as that of $X$ and $Z$.}
    \label{fig:dpe}
\end{figure}

\begin{figure}
    \centering
    \includegraphics[width=\linewidth]{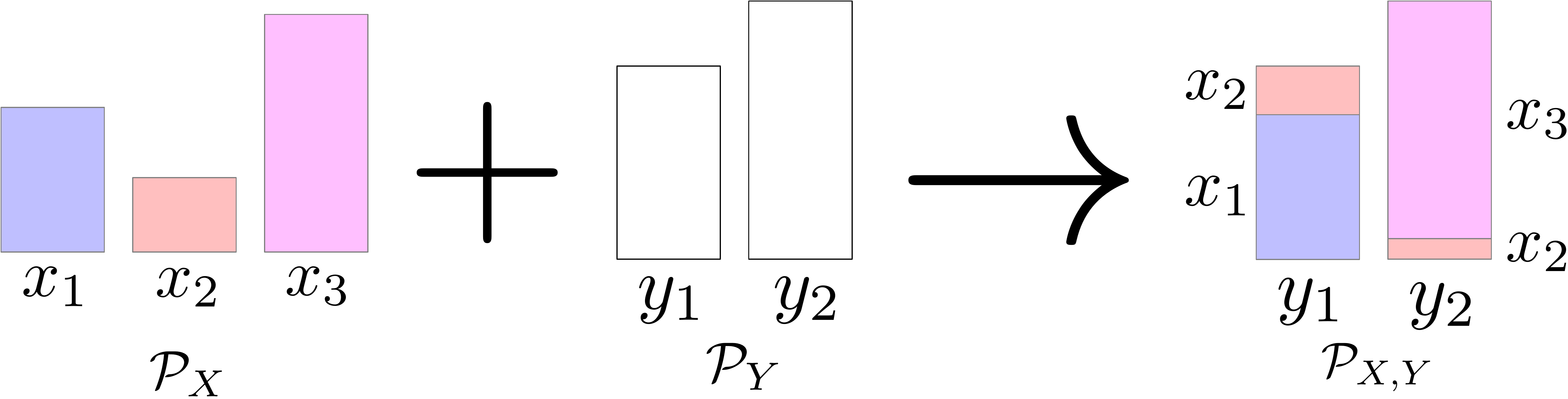}
    \caption{An approximate minimum entropy coupling.
    Given marginal distributions $\mathcal{P}_X$ and $\mathcal{P}_Y$, minimum entropy coupling constructs a joint distribution $\mathcal{P}_{X, Y}$ having minimal joint entropy. 
    }
    \label{fig:mec}
\end{figure}

\paragraph{Minimum Entropy Coupling} In some cases, we may wish to maximize the mutual information between two random variables subject to fixed marginals.
That is, we are tasked with determining a joint distribution $\mathcal{P}_{X, Y}$ that maximizes the mutual information $\mathcal{I}(X; Y)$ between $X$ and $Y$ subject to the constraints that $\mathcal{P}_{X, Y}$ marginalizes to $\mathcal{P}_X$ and $\mathcal{P}_Y$, where $\mathcal{P}_X$ and $\mathcal{P}_Y$ are given as input.
Invoking the relationship between mutual information and joint entropy $\mathcal{I}(X; Y) = \mathcal{H}(X) + \mathcal{H}(Y) - \mathcal{H}(X, Y)$, we see that this problem is equivalent to that of minimizing the joint entropy of $X$ and $Y$.
As a result, this problem is referred to as the \textit{minimum entropy coupling problem}.
A visual example is shown in Figure \ref{fig:mec}.
While minimum entropy coupling is NP-hard \cite{mec-hardness}, \citet{mec-appx} recently showed that there exists a polynomial time algorithm that is suboptimal by no more than one bit.

\section{Markov Coding Games}

We are now ready to introduce Markov coding games (MCGs). An MCG is a tuple $\langle (\mathcal{S}, \mathcal{A}, \mathcal{T}, \mathcal{R}), \mathcal{M}, \mu, \zeta \rangle$, where $(\mathcal{S}, \mathcal{A}, \mathcal{T}, \mathcal{R})$ is a Markov decision process, $\mathcal{M}$ is a set of messages, $\mu$ is a distribution over $\mathcal{M}$ (i.e., the prior over messages), and $\zeta$ is a non-negative real number we call the message priority.
\textbf{An MCG proceeds in the following steps:}
\begin{enumerate}[leftmargin=*, nosep]
    \item First, a message $M \sim \mu$ is sampled from the prior over messages and revealed to the sender.
    \item Second, the sender uses a message conditional policy $\pi_{\mid M}$, which takes states $s \in \mathcal{S}$ and messages $m \in \mathcal{M}$ as input and outputs distributions over MDP actions $\Delta(\mathcal{A})$, to generate a trajectory $Z \sim (\mathcal{T}, \pi_{\mid M})$ from the MDP.
    \item Third, the sender's terminal MDP trajectory $Z$ is given to the receiver as an observation.
    \item Fourth, the receiver uses a terminal MDP trajectory conditional policy $\pi_{\mid Z}$, which takes terminal trajectories $z \in \mathcal{Z}$ as input and outputs distributions over messages $\Delta(\mathcal{M})$, to estimate the message $\hat{M} \sim \pi_{\mid Z}(Z)$.
\end{enumerate}
The objective of the agents is to maximize the expected weighted sum of the return and the accuracy of the receiver's estimate $\mathbb{E} \left[\mathcal{R}(Z) + \zeta \mathbb{I}[M = \hat{M}] \mid \pi_{\mid M}, \pi_{\mid Z} \right]$.
Optionally, in cases in which a reasonable distance function is available, we allow for the objective to be modified to minimizing the distance between the message and the guess $d(M, \hat{M})$, rather than maximizing the probability that the guess is correct. A visual depiction of the structure MCGs is given in Figure~\ref{fig:implicit-referential-game}.
\begin{figure}
    \centering
    \includegraphics[width=\linewidth]{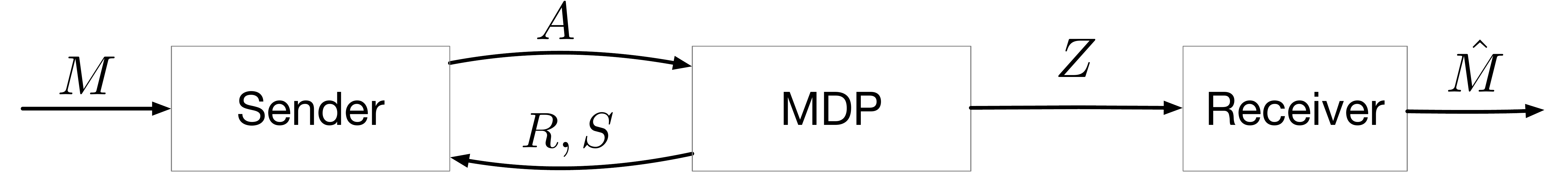}
    \caption{
    \textbf{A depiction of the structure of MCGs.}
    First, the sender is given a message.
    Second, the sender is tasked with a MDP, unrelated to the message.
    Third, the receiver observes the sender's trajectory.
    Fourth, the receiver estimates the message.
    }
    \label{fig:implicit-referential-game}
\end{figure}

\subsection{An Example}
\begin{table}
\centering
\caption{A payoff matrix for a simple MCG.}
\begin{tabular}{|c|c|cc|}
\hline
 &  & Receiver &\\
Message & Sender & $\hat{m}$ & $\hat{m}'$\\
\hline
 & $a$ & (4, $\zeta$) & (4, 0)\\
 $m$ & $a'$ & (3, $\zeta$) & (3, 0)\\
 & $a''$ & (0, $\zeta$) & (0, 0)\\
\hline
 & $a$ & (4, 0) & (4, $\zeta$)\\
 $m'$ & $a'$ & (3, 0) & (3, $\zeta$)\\
 & $a''$ & (0, 0) & (0, $\zeta$)\\
 \hline
\end{tabular}
\label{fig:mcgex}
\end{table}

A payoff matrix for a simple MCG is shown in Table \ref{fig:mcgex}.
In this game, the sender is given one of two messages $m$ and $m'$, with equal probability.
It then chooses among three actions $a$, $a'$ and $a''$, for which the rewards are $4$, $3$, and $0$, respectively.
The receiver observes the sender's trajectory (which is equivalent to the sender's action in MDPs with one state) and estimates the message using actions $\hat{m}$ and $\hat{m}'$, corresponding to guessing to $m$ and $m'$, respectively.
The receiver accrues a reward of $\zeta$ for guessing correctly.
The payoff entries in table denote $(\mathcal{R}(Z), \zeta \mathbb{I}[M = \hat{M}])$ for each outcome.

As is generally true of MCGs, this MCG is difficult for independent approximate dynamic programming-based approaches because their learning dynamics are subject to local optima.
Consider a run in which the sender first learns to maximize its immediate reward by always selecting $a$.
Now, the receiver has no reason to condition its guess on the sender's action because the sender is not communicating any information about the message.
As a result, thereafter, the sender has no incentive to communicate any information in its message, because the receiver would ignore it anyways.
This outcome, sometimes called a babbling equilibrium \citep{babbling}, leads to a total expected return of $4 + \zeta/2$ (sender always selects $a$, receiver guesses randomly).
In cases in which the message priority $\zeta$ is small (i.e., communication is unimportant), the babbling equilibrium performs well.
However, it can perform arbitrarily poorly as $\zeta$ becomes large.

\subsection{Special Cases}

We can express both referential games and various source coding settings as special cases of the MCG formalism by describing the MDPs to which they correspond.

\textbf{(A Large Class of) Referential Games} We can express a $T$ step referential game as follows.
\begin{itemize}[leftmargin=*,nosep]
    \item The state space $\mathcal{S} = \{s^0, s^1, \dots, s^T\}$.
    \item The transition function deterministically maps $s^t \mapsto s^{t+1}$ and terminates at input $s^T$.
    \item The reward maps to zero for every state action pair.
\end{itemize}

\textbf{Standard Source Coding} We can express the standard source coding problem as follows.
\begin{itemize}[leftmargin=*,nosep]
    \item The state space $\mathcal{S} = \{s\}$.
    \item The action space $\mathcal{A} = \tilde{\mathcal{A}} \cup \{\emptyset\}$.
    \item The transition function deterministically maps to $s$ to $s$ for all $a \in \tilde{A}$ and terminates the game on $\emptyset$.
    \item The reward maps to $-1$ for $a \in \tilde{\mathcal{A}}$ and maps $\emptyset$ to $0$.
\end{itemize}

\textbf{Length Limited Source Coding} Length limited source coding can be captured in the same way as standard source coding with the modifications that $\mathcal{S} = \{s^0, s^1, \dots, s^T\}$, and the transition function maps $s^t \mapsto s^{t+1}$ and terminates on $s^T$, where $T$ is the length limit.

\textbf{Source Coding with Unequal Symbol Costs} Source coding with unequal symbol costs can be captured in the same way as standard source coding with the modification that $\mathcal{R}(\cdot, a)$ returns the negative symbol cost of $a$, rather than returning $-1$, for $a \in \tilde{\mathcal{A}}$.

\section{Maximum Entropy Reinforcement Learning and Minimum Entropy Coupling} 

To address MCGs, we propose a novel algorithm we call MEME.
MEME (and more broadly, any algorithm geared toward MCGs) is faced with two competing incentives.
On one hand, it needs to maximize expected return $\mathcal{R}(Z)$ generated by the MDP trajectory $Z$.
On the other hand, it needs to maximize the amount of information $\mathcal{I}(M; Z)$ communicated to the receiver about the message, so as to maximize the probability of a correct guess.
MEME handles this trade-off using a two step process for constructing the sender's policy.
In the first step, it computes an MDP policy that balances between high cumulative reward and large cumulative entropy using MaxEnt RL.
In the second step, it opportunistically leverages the existing uncertainty in the MaxEnt RL policy by coupling its probability mass with the posterior over messages in such a way that the expected return does not decrease and the amount of mutual information between the message and trajectory is greedily maximized.
Finally, because the receiver is also privy to the MaxEnt RL policy, it can compute the exact same posterior over the message as the sender and guess the most likely message.
This process is shown in Figure \ref{fig:highlevelmeme} and described in detail below.
Thereafter, we show this procedure possesses desirable guarantees and discuss intuition and scalability.

\begin{figure}
    \centering
    \includegraphics[width=1.0\linewidth]{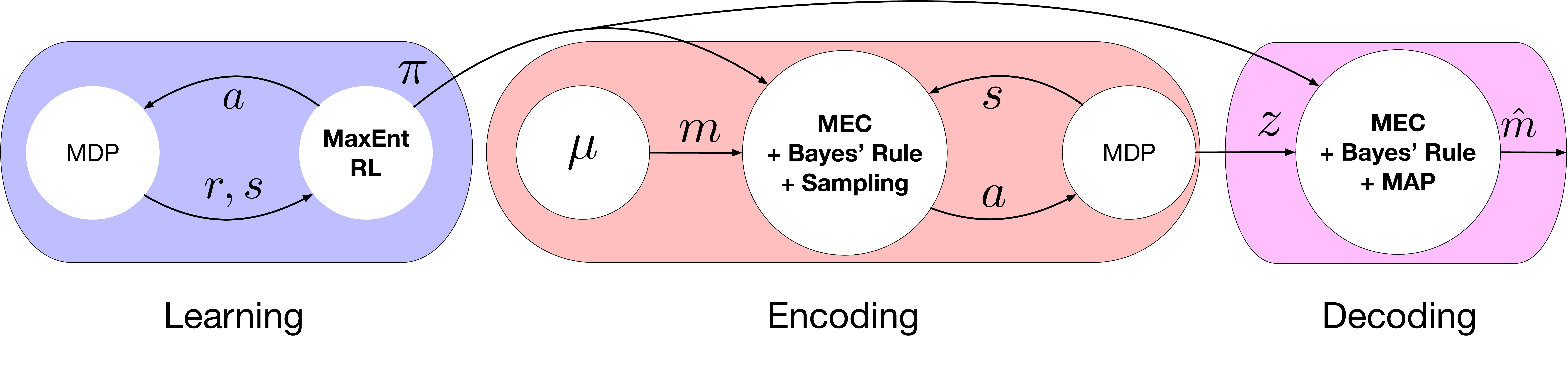}
    \caption{A high level depiction of MEME.}
    \label{fig:highlevelmeme}
\end{figure}

\subsection{Method Description (MEME)}

\paragraph{Step One: Maximum Entropy Reinforcement Learning} In the first step, MEME uses MaxEnt RL to construct an MDP policy $\pi$.
This policy is an MDP policy, not an MCG policy, and therefore does not depend on the message.
Note that this policy depends on the choice of temperature $\alpha$ used for the MaxEnt RL algorithm.

\paragraph{Step Two: Minimum Entropy Coupling} 
In the second step, at execution time, MEME constructs a message-conditional policy online using MECs, as shown in Algorithm \ref{alg:mec-sender}.
Say that, up to time $t$, the sender is in state $s^t$, history $h^t$ and has played according to the state and message conditional policy $\pi_{\mid M}^{:t}$ so far.
Let 
\[b^t = \mathcal{P}(M \mid h^t, \pi_{\mid M}^{:t})\] be the posterior over the message, conditioned on the history and the historical policy.
MEME performs a MEC between the posterior over the message $b^t$ and distribution over actions $\pi(s^t)$, as determined by the MDP policy.
Let $\nu = \text{MEC}(b^t, \pi(s^t))$ denote joint distribution over messages and actions resulting from the coupling.
Then MEME sets the sender to act according to the message conditional distribution 
\[\pi^t_{\mid M}(s^t, m) = \nu(A^t \mid M=m)\] 
of the coupling distribution $\nu = \text{MEC}(b^t, \pi(s^t))$, as is described in Algorithm \ref{alg:mec-sender}.

\begin{algorithm}
   \caption{MEME (Sender)}
   \label{alg:mec-sender}
\begin{algorithmic}
   \STATE {\bfseries Input:} MaxEnt MDP policy $\pi$, MCG $G_{\text{MCG}}$
   \STATE message $m \gets \text{reset}(G_{\text{MCG}})$ // observe message
   \STATE belief $b \gets G_{\text{MCG}}.\mu$
   \STATE state $s \gets G_{\text{MCG}}.s^0$
   \WHILE{sender's turn}
        \STATE joint $\nu \gets \mbox{minimum\_entropy\_coupling}(b, \pi(s))$
        \STATE decision rule $\pi_{\mid M}(s) \gets \nu(A \mid M)$
        \STATE sender action $a \sim \pi_{\mid M}(s, m)$
        \STATE new belief $b \gets \text{posterior\_update}(b, \pi_{\mid M}(s), a)$
        \STATE next state $s \gets G_{\text{MCG}}.\text{step}(a)$
   \ENDWHILE
\end{algorithmic}
\end{algorithm}
Given the sender's MDP trajectory, MEME's receiver uses the sender's MDP policy and MEC procedure to reconstruct the sender's message conditional policy along the trajectory;
thereafter, the receiver computes the posterior and guesses the maximum a posteriori (MAP) message, as shown in Algorithm \ref{alg:mec-receiver}.

\begin{algorithm}
   \caption{MEME (Receiver)}
   \label{alg:mec-receiver}
\begin{algorithmic}
   \STATE {\bfseries Input:} MaxEnt MDP policy $\pi$, MCG $G_{\text{MCG}}$
   \STATE MDP trajectory $z \gets G_{\text{MCG}}.\text{receiver\_observation}()$
   \STATE belief $b \gets G_{\text{MCG}}.\mu$
   \FOR{$s, a \in z$}
    \STATE joint $\nu \gets \mbox{minimum\_entropy\_coupling}(b, \pi(s))$
    \STATE decision rule $\pi_{\mid M}(s) \gets \nu(A \mid M)$
    \STATE new belief $b \gets \text{posterior\_update}(b, \pi_{\mid M}(s), a)$
   \ENDFOR
   \STATE estimated message $\hat{m} \gets \text{arg max}_{m'}b(m')$
   \STATE $G_{\text{MCG}}.\text{step}(\hat{m})$
\end{algorithmic}
\end{algorithm}

\begin{figure*}
    \centering
    \includegraphics[width=1.0\linewidth]{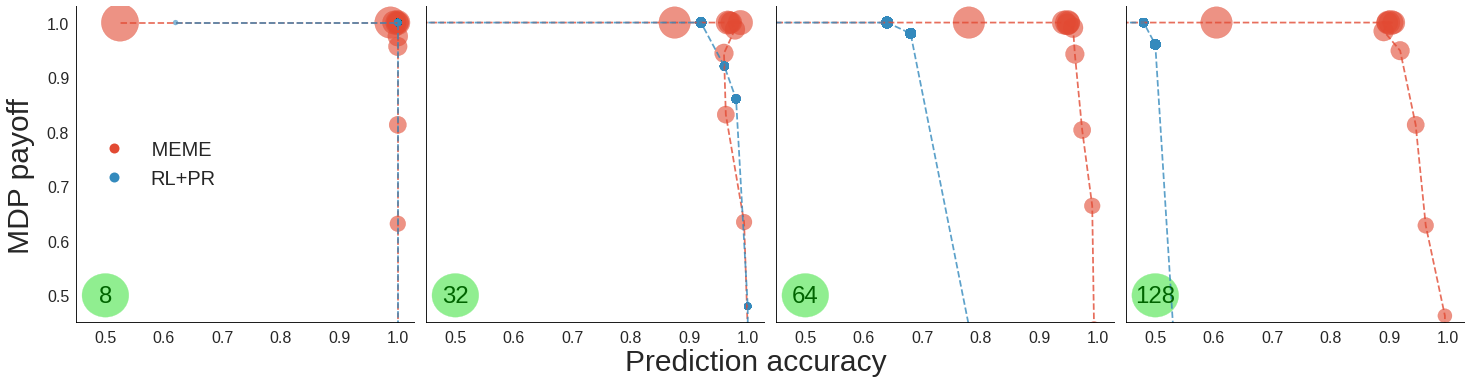}
    \caption{Results for MEME and RL+PR on CodeGrid with varying message space sizes.
    }
    \label{fig:refgrid}
\end{figure*}

\subsection{Method Analysis}

MEME possesses guarantees both concerning the return $\mathcal{R}(Z)$ generated by the MDP and concerning the amount of information communicated $\mathcal{I}(M; Z)$.

\begin{prop} \label{prop:ap}
At any state, in expectation over messages, the message conditional policy $\pi_{\mid M}$ selects actions with the same probabilities as the MDP policy $\pi$.
\end{prop}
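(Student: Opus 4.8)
The plan is to unwind the definition of $\pi_{\mid M}$ given in the method description and show that averaging its action distribution over the posterior belief $b^t$ returns exactly $\pi(s^t)$. Fix a state $s^t$ reached along history $h^t$, and recall that $\nu = \mathrm{MEC}(b^t, \pi(s^t))$ is a coupling of the two marginals $b^t$ (over messages) and $\pi(s^t)$ (over actions). By the construction in Algorithm~\ref{alg:mec-sender}, the message conditional policy is $\pi^t_{\mid M}(s^t, m) = \nu(A^t \mid M = m)$.

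The key step is the law of total probability applied to the coupling $\nu$: since $\nu$ has $\pi(s^t)$ as its action-marginal, we have for each action $a$ that $\pi(s^t)(a) = \nu(A^t = a) = \sum_{m} \nu(M = m)\,\nu(A^t = a \mid M = m) = \sum_m b^t(m)\, \pi^t_{\mid M}(s^t, m)(a)$, where we used that the message-marginal of $\nu$ is $b^t$. This is precisely the statement that $\mathbb{E}_{m \sim b^t}\big[\pi^t_{\mid M}(s^t, m)\big] = \pi(s^t)$, i.e., in expectation over the message (distributed according to the relevant posterior $b^t$, which is the conditional law of $M$ given the history), the message conditional policy picks actions with the same probabilities as the MDP policy $\pi$.

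I would then note that the belief $b^t = \mathcal{P}(M \mid h^t, \pi^{:t}_{\mid M})$ is exactly the conditional distribution of $M$ given the realized history, so ``in expectation over messages'' in the statement should be read as expectation with respect to this posterior; this makes the per-state claim immediate from the marginalization identity above. If one instead wants the statement in terms of the global prior $\mu$, it follows by a straightforward induction on $t$: the base case is $b^0 = \mu$, and the inductive step uses that the distribution over $(h^t, M)$ factors so that conditioning on reaching $h^t$ yields belief $b^t$, to which the one-step argument applies.

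The only real subtlety — hardly an obstacle — is bookkeeping about \emph{which} distribution ``expectation over messages'' refers to at a given state: it is the posterior $b^t$ consistent with the history that led there, not the prior $\mu$, and one must be careful that the coupling is performed against exactly that posterior (which is what Algorithm~\ref{alg:mec-sender} does). Once this is pinned down, the proof is a one-line consequence of the defining property of a coupling, and no properties of MEC beyond ``it is a valid coupling of the given marginals'' are needed.
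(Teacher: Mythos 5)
Your proof is correct and follows essentially the same route as the paper's: fix a state, observe that $\nu = \mathrm{MEC}(b^t,\pi(s^t))$ is a coupling of the posterior and the MDP policy, and conclude by the marginalization property that averaging $\pi_{\mid M}$ over messages (under the posterior $b^t$) recovers $\pi(s^t)$. Your added bookkeeping about which distribution ``expectation over messages'' refers to simply makes explicit what the paper's one-line argument leaves implicit.
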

\begin{proof}
Fix an arbitrary state $s$.
Let $b$ be a posterior over the message induced by the sender's message conditional policy on the way to $s$.
Then recall that MEME uses the distribution $\text{MEC}(b, \pi(s))$ to select its action.
Because MECs guarantee that the resultant joint distribution marginalizes correctly, it follows directly that MEME must select its actions with the same probabilities as $\pi(s)$.
\end{proof}

Note that Proposition \ref{prop:ap} only holds in expectation over messages.
For particular messages, the distribution over actions will generally differ from the MaxEnt policy.

\begin{prop} \label{prop:er}
In expectation over messages, the expected return generated from the MDP by the message conditional policy $\pi_{\mid M}$ is equal to that of the MaxEnt policy $\pi$. That is,
\[\mathbb{E} \left[\mathcal{R}(Z) \mid  \pi_{\mid M}\right] = \mathbb{E} \left[\mathcal{R}(Z) \mid \pi\right].\]
\end{prop}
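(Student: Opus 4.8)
The plan is to prove the stronger claim that, after marginalizing over the message, the message-conditional policy $\pi_{\mid M}$ and the MaxEnt policy $\pi$ induce identical distributions over terminal MDP trajectories; the equality of expected returns then follows immediately because $\mathcal{R}(Z)$ is a deterministic function of $Z$.

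First I would argue, by induction on the length $t$ of the history, that $\mathcal{P}(H^t = h^t \mid \pi_{\mid M}) = \mathcal{P}(H^t = h^t \mid \pi)$ for every history $h^t = (s^0, a^0, \dots, s^t)$. The base case $t = 0$ is immediate, since the initial state does not depend on the policy. For the inductive step I would decompose $\mathcal{P}(H^{t+1} = (h^t, a^t, s^{t+1}) \mid \pi_{\mid M}) = \mathcal{P}(H^t = h^t \mid \pi_{\mid M}) \cdot \mathcal{P}(A^t = a^t \mid H^t = h^t, \pi_{\mid M}) \cdot \mathcal{T}(s^{t+1} \mid s^t, a^t)$, apply the inductive hypothesis to the first factor, and note that the transition factor is shared with the process under $\pi$. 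The crux is the middle factor: conditioned on $h^t$, the message has posterior $b^t = \mathcal{P}(M \mid h^t, \pi_{\mid M}^{:t})$, and given the message the sender draws $a^t$ from $\nu(\cdot \mid M = m)$ where $\nu = \text{MEC}(b^t, \pi(s^t))$; hence by the law of total probability over $M$, $\mathcal{P}(A^t = a^t \mid H^t = h^t, \pi_{\mid M}) = \sum_m b^t(m)\, \nu(a^t \mid M = m) = \nu(A^t = a^t) = \pi(s^t)(a^t)$, where the last equality is exactly the marginalization guarantee of the minimum entropy coupling (i.e., the content of Proposition~\ref{prop:ap}). Since this matches the corresponding action probability under $\pi$, the induction closes.

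With the history distributions (and in particular the terminal trajectory distributions) shown to coincide, I would conclude $\mathbb{E}[\mathcal{R}(Z) \mid \pi_{\mid M}] = \sum_z \mathcal{P}(Z = z \mid \pi_{\mid M}) \mathcal{R}(z) = \sum_z \mathcal{P}(Z = z \mid \pi) \mathcal{R}(z) = \mathbb{E}[\mathcal{R}(Z) \mid \pi]$.

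I do not anticipate a genuine obstacle: this is a routine induction once one commits to proving the stronger trajectory-distribution identity rather than reasoning about returns directly. The only step needing a little care is the bookkeeping around the posterior — namely, that $b^t$ really is the posterior over $M$ given $h^t$ under the sender's own policy, so that conditioning on $M$ and summing legitimately collapses $\mathcal{P}(A^t = a^t \mid H^t = h^t, \pi_{\mid M})$ to the action marginal of the coupling, which is where Proposition~\ref{prop:ap} (and through it the marginal-preserving property of MEC) does the work.
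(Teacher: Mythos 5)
Your proposal is correct and follows essentially the same route as the paper: the paper's proof simply invokes Proposition~\ref{prop:ap} to assert that all trajectories are generated with the same probabilities and hence the expected returns coincide, and your induction on history length (using the law of total probability over $M$ and the MEC marginalization guarantee) is just a fully spelled-out version of that one-line argument.
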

\begin{proof}
It follows from Proposition \ref{prop:ap} that all trajectories are generated with the same probabilities and therefore that the expected returns are the same.
\end{proof}

Note that, as with Proposition \ref{prop:ap}, Proposition \ref{prop:er} only holds in expectation over messages.
For particular messages, the expected return of the message conditional $\pi_{\mid M}$ will generally differ from that of the MaxEnt policy $\pi$.

\begin{prop} \label{prop:mi}
At each decision point, MEME greedily maximizes the mutual information $I(M; H^{t+1} \mid b^t, h^t)$ between the message $M$ the history at the next time step $H^{t+1}$, given the contemporaneous posterior and history, subject to Proposition \ref{prop:ap}.
\end{prop}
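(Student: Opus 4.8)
The plan is to show that, once we condition on the current history $h^t$ and posterior $b^t$, the quantity $\mathcal{I}(M; H^{t+1} \mid b^t, h^t)$ depends on the sender's decision rule only through the joint distribution it induces over the pair $(M, A^t)$, and that maximizing it over decision rules compatible with Proposition~\ref{prop:ap} is exactly the minimum entropy coupling problem that MEME solves by construction. So the proof is essentially a reduction: next-history mutual information $\to$ next-action mutual information $\to$ negative joint entropy $\to$ MEC.

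First I would reduce from the next history to the next action. Conditioned on $h^t$, the next history $H^{t+1}$ is a deterministic function of $(A^t, S^{t+1})$, and $S^{t+1} \sim \mathcal{T}(s^t, A^t)$ is drawn independently of $M$; that is, $M \perp S^{t+1} \mid A^t$ (conditionally on $h^t, b^t$). By the chain rule for mutual information, $\mathcal{I}(M; A^t, S^{t+1} \mid b^t, h^t) = \mathcal{I}(M; A^t \mid b^t, h^t) + \mathcal{I}(M; S^{t+1} \mid A^t, b^t, h^t)$, and the second term vanishes by this conditional independence (equivalently, $M - A^t - S^{t+1}$ is a Markov chain, so the data processing inequality pins it down). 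Hence $\mathcal{I}(M; H^{t+1} \mid b^t, h^t) = \mathcal{I}(M; A^t \mid b^t, h^t)$.

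Next I would make the two marginals explicit. Conditioned on $h^t$, the message $M$ has distribution $b^t$, so $\mathcal{H}(M \mid b^t, h^t) = \mathcal{H}(b^t)$ is fixed; by Proposition~\ref{prop:ap} any admissible decision rule forces the action marginal to equal $\pi(s^t)$, so $\mathcal{H}(A^t \mid b^t, h^t) = \mathcal{H}(\pi(s^t))$ is also fixed. Writing $\mathcal{I}(M; A^t \mid b^t, h^t) = \mathcal{H}(b^t) + \mathcal{H}(\pi(s^t)) - \mathcal{H}(M, A^t \mid b^t, h^t)$, the first two terms are constants independent of the decision rule, so maximizing the mutual information is equivalent to minimizing the joint entropy $\mathcal{H}(M, A^t \mid b^t, h^t)$ over joint distributions with marginals $b^t$ and $\pi(s^t)$ — precisely the minimum entropy coupling problem with inputs $b^t$ and $\pi(s^t)$. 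Since MEME acts according to $\nu = \text{MEC}(b^t, \pi(s^t))$, which by definition attains this minimum, the claim follows.

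The one step that has to be argued rather than asserted — and the place I expect the real work to be — is that the feasible set of the optimization genuinely is \emph{all} couplings of $b^t$ and $\pi(s^t)$: every such coupling $\nu$ is realized by the message conditional decision rule $m \mapsto \nu(A^t \mid M = m)$ paired with the fixed belief $b^t$, and conversely every admissible decision rule induces a joint whose $M$-marginal is $b^t$ (because $b^t$ is the current posterior) and whose $A^t$-marginal is $\pi(s^t)$ (by Proposition~\ref{prop:ap}). Once this bijection between admissible decision rules and couplings is in place, the reduction above closes the argument. I would also note the caveat that if $\text{MEC}$ is instantiated with the polynomial-time approximation of \citet{mec-appx} rather than an exact solver, the statement holds up to the corresponding one-bit slack in joint entropy; with an exact MEC oracle it is exact.
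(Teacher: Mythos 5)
Your proposal is correct and follows essentially the same route as the paper: reduce $\mathcal{I}(M; H^{t+1}\mid b^t,h^t)$ to $\mathcal{I}(M; A^t\mid b^t,h^t)$, expand it as $\mathcal{H}(M)+\mathcal{H}(A^t)-\mathcal{H}(M,A^t)$, observe that the two marginal entropies are fixed by $b^t$ and Proposition~\ref{prop:ap}, and conclude that minimizing the joint entropy is exactly the minimum entropy coupling MEME computes. The only differences are cosmetic refinements: you justify the first reduction with the chain rule and $M \perp S^{t+1} \mid A^t$ where the paper applies the data processing inequality in both directions, and you make explicit the correspondence between admissible decision rules and couplings (and the one-bit slack under approximate MEC), which the paper leaves implicit.
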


\begin{proof}
For conciseness, we leave the conditional dependence on $b^t$ and $h^t$ implicit in the argument.
First, we claim that $\mathcal{I}(M; H^{t+1}) = \mathcal{I}(M; A^t)$.
To see this, first consider that $H^{t+1}  \equiv (h^t, A^t, S^{t+1})$.
This means we have $\mathcal{I}(M; H^{t+1}) = \mathcal{I}(M; (A^t, S^{t+1}))$ since we are conditioning on $h^t$.
Now, consider that because the message influences the next state only by means of the action, we have the causal graph $M \to A^t \to (A^t, S^{t+1})$, which implies that $M \perp (A^t, S^{t+1}) \mid A^t$.
Also, we trivially have $M \perp A^t \mid (A^t, S^{t+1})$.
These conditional independence properties allow us to apply the data processing inequality:
$X \perp Z \mid Y \Rightarrow \mathcal{I}(X; Y) \geq \mathcal{I}(X; Z)$.
Applying it in both directions yields $\mathcal{I}(M; (A^t, S^{t+1})) = \mathcal{I}(M; A^t)$.

Now consider that we can rewrite mutual information using the equality 
\begin{align*}
    \mathcal{I}(M; A^t) = \mathcal{H}(M) + \mathcal{H}(A^t) - \mathcal{H}(M, A^t).\label{eq:decomp}
\end{align*}

The first term $\mathcal{H}(M)$ is exogenous by virtue of being determined by $b^t$.
The second term is exogenous by virtue of being subject to Proposition \ref{prop:ap}.
The third term is the joint entropy between $M$ and $A^t$, which is exactly what a minimum entropy coupling minimizes.
\end{proof}

\paragraph{MaxEnt RL} The analysis thus far concerns justifying MEME's iterative minimum entropy coupling procedure with formal guarantees.
While we do not provide guarantees with analogous rigor for the MaxEnt RL component, there is nevertheless strong justification for its usage.
Recall that MEME is concerned with i) maximizing the MDP return and ii) greedily maximizing the amount of information communicated at each decision point.
These two quantities are 
$\textcolor{blue}{\mathcal{R}(Z)}$ and $\textcolor{blue}{\mathcal{H}(A^t)} + \mathcal{H}(M) \textcolor{red}{-\mathcal{H}(M, A^t)}$
where the second quantity corresponds to the one-step mutual information, as was shown in the proof of Proposition \ref{prop:mi}.
Consider the four terms here.
The term in black text, $\mathcal{H}(M)$, is exogenously specified, and therefore can be ignored for optimization purposes.
The term in red text, $\textcolor{red}{-\mathcal{H}(M, A^t)}$, is exactly the quantity being maximized by minimum entropy coupling.
Elegantly, the remaining terms (shown in blue text), are exactly the quantities that MaxEnt RL is concerned with maximizing---the MDP return $\textcolor{blue}{\mathcal{R}(Z)}$ and the entropy over actions $\textcolor{blue}{\mathcal{H}(A^t)}$.

\begin{figure*}
    \centering
    \includegraphics[width=1.0\linewidth]{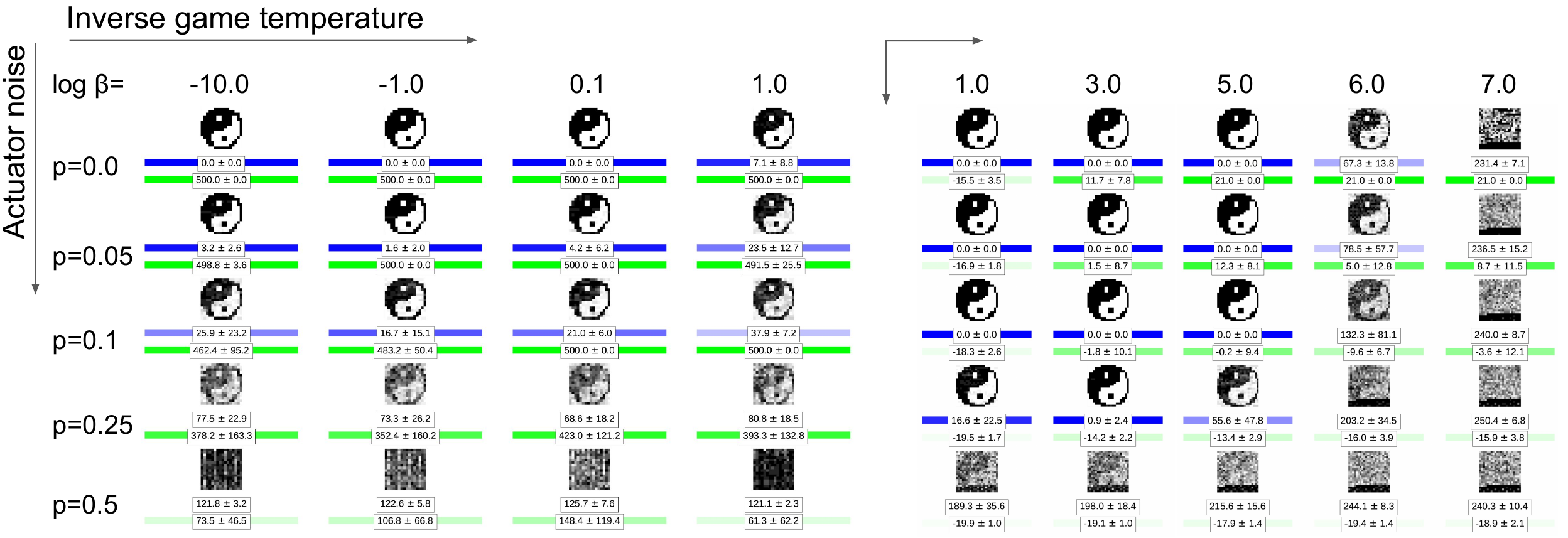}
    \caption{Results for MEME on CodeCart (left) and CodePong (right) with varying amounts of actuator noise and temperatures. The blue bars indicate pixel errors, and the green bars show scores.
    }
    \label{fig:cartpole_and_pong}
\end{figure*}

MaxEnt RL is also advantageous in that it allows MEME to trade off between MDP return and communication using the temperature $\alpha$.
For large temperatures, MEME places more emphasis on maximizing $\mathcal{H}(A^t)$ (and thereby $\mathcal{I}(M; Z)$), while for smaller temperatures, MEME places more emphasis on maximizing $\mathcal{R}(Z)$.
The appropriate choice of $\alpha$ for a particular setting will depend on the message priority $\zeta$.
Code for MEME illustrating the functionality of $\alpha$ applied to the example from Figure \ref{fig:mcgex} can be found at \url{https://bit.ly/36I3LDm} (the code is executable in browser).

\subsection{Scalability} Historically, performing even approximate versions of the minimum entropy couplings described in Algorithms \ref{alg:mec-sender} and \ref{alg:mec-receiver} would have been impossible due to a lack of polynomial minimum entropy coupling methods.
However, with the recent advent of efficient approximation algorithms \cite{mec-appx}, it is now possible to perform near-optimal couplings in $O(N \log N)$ time.
Still, $O(N \log N)$ is not scalable to very large message spaces.
To accommodate this fact, for large message spaces, we use a factored representation $\mathbb{M} \subset \mathbb{M}_1 \times \dots \times \mathbb{M}_k$ and a corresponding factored belief $(b_1, \dots, b_k)$, where each $b_j$ tracks the posterior over $\mathbb{M}_j$.
Specifically, at each time step, we perform a MEC between $\pi(s^t)$ and the block $b_j = \text{arg max}_{b_j} \mathcal{H}(b_j)$ having the largest entropy.\footnote{Pseudocode for this procedure is provided in the appendix.}
By doing so, the size of the space over which the minimum entropy coupling is performed is greatly reduced, allowing us to scale to extremely large message spaces while retaining the guarantees of Proposition \ref{prop:er}.
While Proposition \ref{prop:mi} is no longer satisfied---because i) we use approximate minimum entropy coupling, ii) we only couple a single message block at a time, and iii) the posterior is only approximate if the message prior is not factorable---mutual information remains high in practice, as we will see in the experiments.

\section{Experiments}

We investigate the efficacy of MEME on MCGs based on a gridworld, Cartpole and Pong.
For all three we use MaxEnt Q-learning for our MDP policy.
Our codebase is available at \url{https://github.com/schroederdewitt/meme}.

\paragraph{CodeGrid} In our gridworld MCG, which we call CodeGrid, the sender is placed on a $4 \times 4$ grid in which it can move left, right, up and down. 
The sender starts at position $(1,1)$ and must to move to $(4,4)$ within $8$ time steps to achieve the maximal MDP return of $1$. Otherwise, the game terminates after $8$ time steps and the sender receives a payoff of $0$.

For our baseline, we trained an RL agent to play as the MCG sender.
We paired this RL agent with a perfect receiver, meaning that, for each episode, the receiver computed the exact posterior over the message based on the sender's current policy and guessed the MAP, both during training and testing.
We abbreviate this baseline as RL+PR (where PR stands for perfect receiver).
RL+PR is inspired by related work such as \citet{strouse,diayn}.
Pseudocode for the RL+PR baseline can be found in the appendix.

We show results for MEME and RL+PR across a variety of settings in Figure \ref{fig:refgrid}.
The column of the figure corresponds to the cardinality of the message space; the exact size is listed in the green bubble.
We use a uniform marginal over messages in each case.
The $x$-axis corresponds to the proportion of the time that the receiver correctly guesses the message.
The $y$-axis corresponds to the MDP payoff (in this case whether the sender reached the opposing corner of the grid within the time limit).
Both MEME and RL+PR possess mechanisms to trade-off between these two goals.
MEME can adjust its temperature $\alpha$, while RL+PR can adjust the value of $\zeta$ used during training.
Figure \ref{fig:refgrid} show the Pareto curve for each.
For MEME, the size of the circle corresponds to the inverse temperature $\beta = 1/\alpha$.

For the settings with 8 messages and 32 messages, we observe that both MEME and RL+PR achieve strong performance---achieving optimal or nearly optimal MDP returns and optimal or nearly optimal transmission.
However, as the size of the message space increases, the performance of RL+PR falls off sharply.
Indeed, for the 128 message setting, RL+PR is only able to correctly transmit the message slightly more than half the time.
RL+PR's inability to achieve strong performance in these cases may be a result of the fact that communication protocols violate the locality assumption of approximate dynamic programming approaches.
In contrast, we observe that MEME, which constructs its protocol using MEC, remains optimal or near optimal both the 64 and the 128 message settings.

\paragraph{CodeCart and CodePong} While the CodeGrid experiments show that MEME can outperform an obvious approach to MCGs, it remains to be determined whether MEME can perform well at scale.
Toward the end of making this determination, we consider MCGs based on Cartpole and Pong.
In these MCGs, the message spaces are the sets of $16 \times 16$ and $22 \times 22$ binary images, respectively, each with a uniform prior.
The cardinality of these spaces ($>10^{77}$ and $>10^{145}$) is exceedingly large.
In fact, it is not immediately clear how an RL+PR-like approach could even be scaled to this setting.
On the other hand, MEME is easily adaptable to this setting, using the factorization scheme suggested in the method section.

We also include results with variable amounts of \textit{actuator noise}, i.e., with some probability $p$, the environment executes a random action, rather than the one it intended.
Actuator noise models the probability of error during transmission and arises naturally in many settings involving communication, such as noisy channel coding.
In this setting, the receiver may only observe the action executed by the environment, rather than the one intended by the sender. While this setting is not technically an MCG and Proposition \ref{prop:mi} no longer holds (even with perfect MEC), we show MEME to be a strong heuristic in these settings.

We show the results in Figure \ref{fig:cartpole_and_pong} for CodeCart and CodePong, respectively.
For both plots, the $y$-axis corresponds to the amount of actuator noise (lower is more noise). The $x$-axis corresponds the inverse temperature value $\beta=1/\alpha$ (further right is colder temperature, meaning there is more emphasis on MDP expected return).
Each entry contains a decoded yin-yang symbol with the corresponding temperature and actuator noise.
Each entry also lists the $\ell_1$ pixel error (blue) and the MDP expected return (green), along with corresponding standard errors over 10 runs; a brighter color corresponds to better performance.

Remarkably, for both CodeCart and CodePong, we observe that, when there is no actuator noise, MEME is able to perfectly transmit images while achieving the maximum expected return in the MDP.
For CodeCart, this occurs at $\log \beta \in \{-10, -1, 0.1\}$; for CodePong, it occurs at $\log \beta \in \{4, 5\}$.
Interestingly, as the amount of actuator noise increases to a non-zero value, the effect on performance differs between CodeCart and CodePong.
In CodeCart, MEME continues to achieve maximal performance in the MDP, but begins to accumulate errors in the transmission.
In contrast, in CodePong, MEME continues to transmit the message with perfect fidelity, but begins to lose expected return from the MDP.
This suggests that accidentally taking a random action is costlier in Pong than it is in Cartpole, but that, in an informal sense, Pong has more bandwidth to transmit information.
That said, in both cases, the decay in performance is graceful---for example, with $p=0.05$, the decrease in the visual quality of the transmission for Cartpole ($\log \beta = -1$) is difficult to even perceive, while the CodePong sender still manages to win games roughly 80\% of the time ($\log \beta = 5$).
The performance in both settings continues to deteriorate up to $p=1/2$, at which point MEME is neither able to perform adequately on the MDP nor to transmit a clear symbol.

Videos of play are included in the supplemental material.
\section{Conclusions and Future Work}

This work introduces a new problem setting called Markov coding games, which generalize referential games and source coding and are related to channel coding problems and decentralized control.
We contribute an algorithm for MCGs called MEME and show that MEME possesses provable guarantees regarding both the return generated from the MDP and the amount of information content communicated.
Experimentally, we show that MEME significantly outperforms an RL baseline with a perfect receiver.
Finally, we show that MEME is able to scale to extremely large message spaces and transmit these messages with high fidelity, even with some actuator noise, suggesting that it could be robust in real world settings. 

\paragraph{Security} One direction for future work is applying MEME as a method for secure communication. 
Consider a situation in which an agent would like to communicate privileged information to an allied agent, while under adversarial observation. 
One possibility is for the agent to communicate using an encrypted channel. 
However, doing so informs the adversary that the agent possibly has access to, and is communicating, sensitive information. 
This is undesirable in many contexts, as it may result in the adversary launching attacks in order to determine the sensitive information or prevent it from being communicated. 
An alternative is for the agent to perform a task as it normally would and allow the ally (the receiver) to interpret the message from the trajectory. 
This allows the agent to deny that communication ever took place, due to Proposition \ref{prop:ap}.

\paragraph{Iterative Minimum Entropy Coupling} As a subprocedure of MEME, this work made contributions in showing how distributions can be iteratively approximately minimum entropy coupled.
Specifically, Step 2 of factored MEME demonstrates how any distribution factorable into blocks with small supports can be iteratively coupled with an autoregressively specified distribution.
Another direction for future work is to extend this approach so that it may be applied more generally, with two autoregressively specified distributions (rather than requiring one be factorable). 
Such an approach could improve MEME's performance in settings in which the belief prior is not factorable.
It could prove useful in more general contexts as a tool for producing low entropy couplings of distributions with large supports.

\looseness=-1
\section{Acknowledgements}
We thank Frans A. Oliehoek, Alexander Robey, and Yiding Jiang for helpful discussions. This work was supported by the Bosch Center for Artificial Intelligence; the Cooperative AI Foundation; the Cyber Defence Campus, Science and Technology, Armasuisse, Switzerland; the Microsoft Research PhD Scholarship Program; the Microsoft Research EMEA PhD Award; the European Research Council under the European Union’s Horizon 2020 research and innovation programme (grant agreement \#637713); and NVIDIA.

\bibliography{example_paper}

\begin{thebibliography}{34}
\providecommand{\natexlab}[1]{#1}
\providecommand{\url}[1]{\texttt{#1}}
\expandafter\ifx\csname urlstyle\endcsname\relax
  \providecommand{\doi}[1]{doi: #1}\else
  \providecommand{\doi}{doi: \begingroup \urlstyle{rm}\Url}\fi

\bibitem[Bellemare et~al.(2013)Bellemare, Naddaf, Veness, and Bowling]{ale}
Bellemare, M.~G., Naddaf, Y., Veness, J., and Bowling, M.
\newblock The arcade learning environment: An evaluation platform for general
  agents.
\newblock \emph{Journal of Artificial Intelligence Research}, 47:\penalty0
  253–279, Jun 2013.
\newblock ISSN 1076-9757.
\newblock \doi{10.1613/jair.3912}.
\newblock URL \url{http://dx.doi.org/10.1613/jair.3912}.

\bibitem[Bhatt \& Buro(2021)Bhatt and Buro]{varun}
Bhatt, V. and Buro, M.
\newblock Inference-based deterministic messaging for multi-agent
  communication.
\newblock \emph{ArXiv}, abs/2103.02150, 2021.

\bibitem[Chen \& Berger(2005)Chen and Berger]{markov_feedback}
Chen, J. and Berger, T.
\newblock The capacity of finite-state markov channels with feedback.
\newblock \emph{IEEE Transactions on Information Theory}, 51\penalty0
  (3):\penalty0 780--798, 2005.
\newblock \doi{10.1109/TIT.2004.842697}.

\bibitem[{Cicalese} et~al.(2019){Cicalese}, {Gargano}, and {Vaccaro}]{mec-appx}
{Cicalese}, F., {Gargano}, L., and {Vaccaro}, U.
\newblock Minimum-entropy couplings and their applications.
\newblock \emph{IEEE Transactions on Information Theory}, 65\penalty0
  (6):\penalty0 3436--3451, 2019.
\newblock \doi{10.1109/TIT.2019.2894519}.

\bibitem[Das et~al.(2019)Das, Gervet, Romoff, Batra, Parikh, Rabbat, and
  Pineau]{tarmac}
Das, A., Gervet, T., Romoff, J., Batra, D., Parikh, D., Rabbat, M., and Pineau,
  J.
\newblock Tarmac: Targeted multi-agent communication.
\newblock In \emph{{ICML}}, volume~97 of \emph{Proceedings of Machine Learning
  Research}, pp.\  1538--1546. {PMLR}, 2019.

\bibitem[Eysenbach et~al.(2018)Eysenbach, Gupta, Ibarz, and Levine]{diayn}
Eysenbach, B., Gupta, A., Ibarz, J., and Levine, S.
\newblock Diversity is all you need: Learning skills without a reward function.
\newblock \emph{CoRR}, abs/1802.06070, 2018.
\newblock URL \url{http://arxiv.org/abs/1802.06070}.

\bibitem[Farrell \& Rabin(1996)Farrell and Rabin]{babbling}
Farrell, J. and Rabin, M.
\newblock Cheap talk.
\newblock \emph{Journal of Economic Perspectives}, 10\penalty0 (3):\penalty0
  103--118, September 1996.
\newblock \doi{10.1257/jep.10.3.103}.
\newblock URL \url{https://www.aeaweb.org/articles?id=10.1257/jep.10.3.103}.

\bibitem[Foerster et~al.(2016)Foerster, Assael, de~Freitas, and
  Whiteson]{learning-to-comm}
Foerster, J., Assael, I.~A., de~Freitas, N., and Whiteson, S.
\newblock Learning to communicate with deep multi-agent reinforcement learning.
\newblock In Lee, D., Sugiyama, M., Luxburg, U., Guyon, I., and Garnett, R.
  (eds.), \emph{Advances in Neural Information Processing Systems}, volume~29,
  pp.\  2137--2145. Curran Associates, Inc., 2016.
\newblock URL
  \url{https://proceedings.neurips.cc/paper/2016/file/c7635bfd99248a2cdef8249ef7bfbef4-Paper.pdf}.

\bibitem[Golin et~al.(2002)Golin, Kenyon, and Young]{unequal-letter-costs}
Golin, M.~J., Kenyon, C., and Young, N.~E.
\newblock Huffman coding with unequal letter costs.
\newblock In \emph{Proceedings of the Thiry-Fourth Annual ACM Symposium on
  Theory of Computing}, STOC '02, pp.\  785–791, New York, NY, USA, 2002.
  Association for Computing Machinery.
\newblock ISBN 1581134959.
\newblock \doi{10.1145/509907.510020}.
\newblock URL \url{https://doi.org/10.1145/509907.510020}.

\bibitem[Iwata et~al.(1997)Iwata, Morii, and
  Uyematsu]{channel-coding-with-costs}
Iwata, K.-i., Morii, M., and Uyematsu, T.
\newblock An efficient universal coding algorithm for noiseless channel with
  symbols of unequal cost.
\newblock \emph{IEICE Transactions on Fundamentals}, pp.\  2232--2237, 01 1997.

\bibitem[Kostina \& Verdú(2011)Kostina and Verdú]{kostina_2011}
Kostina, V. and Verdú, S.
\newblock Fixed-length lossy compression in the finite blocklength regime:
  Gaussian source.
\newblock In \emph{2011 IEEE Information Theory Workshop}, pp.\  457--461,
  2011.
\newblock \doi{10.1109/ITW.2011.6089501}.

\bibitem[Kovačević et~al.(2015)Kovačević, Stanojević, and
  Šenk]{mec-hardness}
Kovačević, M., Stanojević, I., and Šenk, V.
\newblock On the entropy of couplings.
\newblock \emph{Information and Computation}, 242:\penalty0 369--382, 2015.
\newblock ISSN 0890-5401.
\newblock \doi{https://doi.org/10.1016/j.ic.2015.04.003}.
\newblock URL
  \url{https://www.sciencedirect.com/science/article/pii/S0890540115000450}.

\bibitem[Larmore \& Hirschberg(1990)Larmore and Hirschberg]{length-limited}
Larmore, L.~L. and Hirschberg, D.~S.
\newblock Length-limited coding.
\newblock In \emph{Proceedings of the First Annual ACM-SIAM Symposium on
  Discrete Algorithms}, SODA '90, pp.\  310–318, USA, 1990. Society for
  Industrial and Applied Mathematics.
\newblock ISBN 0898712513.

\bibitem[Lathi(1998)]{isi}
Lathi, B.~P.
\newblock \emph{Modern Digital and Analog Communication Systems 3e Osece}.
\newblock Oxford University Press, Inc., USA, 3rd edition, 1998.
\newblock ISBN 0195110099.

\bibitem[Lazaridou et~al.(2018)Lazaridou, Hermann, Tuyls, and
  Clark]{lazaridou_emergence_2018}
Lazaridou, A., Hermann, K.~M., Tuyls, K., and Clark, S.
\newblock Emergence of {Linguistic} {Communication} from {Referential} {Games}
  with {Symbolic} and {Pixel} {Input}.
\newblock \emph{arXiv:1804.03984 [cs]}, April 2018.
\newblock URL \url{http://arxiv.org/abs/1804.03984}.
\newblock arXiv: 1804.03984.

\bibitem[Lewis(1969)]{lewis_convention_1969}
Lewis, D.~K.
\newblock \emph{Convention: {A} {Philosophical} {Study}}.
\newblock Wiley-Blackwell, 1969.

\bibitem[MacKay(2002)]{information-theory}
MacKay, D. J.~C.
\newblock \emph{Information Theory, Inference \& Learning Algorithms}.
\newblock Cambridge University Press, USA, 2002.
\newblock ISBN 0521642981.

\bibitem[Mao et~al.(2020{\natexlab{a}})Mao, Zhang, Xiao, Gong, and Ni]{mao}
Mao, H., Zhang, Z., Xiao, Z., Gong, Z., and Ni, Y.
\newblock Learning multi-agent communication with double attentional deep
  reinforcement learning.
\newblock \emph{Auton. Agents Multi Agent Syst.}, 34\penalty0 (1):\penalty0 32,
  2020{\natexlab{a}}.
\newblock URL \url{https://doi.org/10.1007/s10458-020-09455-w}.

\bibitem[Mao et~al.(2020{\natexlab{b}})Mao, Zhang, Xiao, Gong, and Ni]{mao2}
Mao, H., Zhang, Z., Xiao, Z., Gong, Z., and Ni, Y.
\newblock Learning agent communication under limited bandwidth by message
  pruning.
\newblock In \emph{AAAI}, 2020{\natexlab{b}}.

\bibitem[Mnih et~al.(2015)Mnih, Kavukcuoglu, Silver, Rusu, Veness, Bellemare,
  Graves, Riedmiller, Fidjeland, Ostrovski, Petersen, Beattie, Sadik,
  Antonoglou, King, Kumaran, Wierstra, Legg, and
  Hassabis]{mnih_human-level_2015}
Mnih, V., Kavukcuoglu, K., Silver, D., Rusu, A.~A., Veness, J., Bellemare,
  M.~G., Graves, A., Riedmiller, M., Fidjeland, A.~K., Ostrovski, G., Petersen,
  S., Beattie, C., Sadik, A., Antonoglou, I., King, H., Kumaran, D., Wierstra,
  D., Legg, S., and Hassabis, D.
\newblock Human-level control through deep reinforcement learning.
\newblock \emph{Nature}, 518\penalty0 (7540):\penalty0 529--533, February 2015.
\newblock ISSN 1476-4687.
\newblock \doi{10.1038/nature14236}.
\newblock URL \url{https://www.nature.com/articles/nature14236}.
\newblock Number: 7540 Publisher: Nature Publishing Group.

\bibitem[Nair \& Hinton(2010)Nair and Hinton]{nair_rectified_2010}
Nair, V. and Hinton, G.~E.
\newblock Rectified linear units improve restricted boltzmann machines.
\newblock In \emph{Proceedings of the 27th {International} {Conference} on
  {International} {Conference} on {Machine} {Learning}}, {ICML}'10, pp.\
  807--814, Madison, WI, USA, June 2010. Omnipress.
\newblock ISBN 978-1-60558-907-7.

\bibitem[Oliehoek et~al.(2016)Oliehoek, Amato, et~al.]{oliehoek_concise_2016}
Oliehoek, F.~A., Amato, C., et~al.
\newblock \emph{A concise introduction to decentralized POMDPs}, volume~1.
\newblock Springer, 2016.

\bibitem[Paszke et~al.(2017)Paszke, Gross, Chintala, Chanan, Yang, Devito, Lin,
  Desmaison, Antiga, and Lerer]{paszke_automatic_2017}
Paszke, A., Gross, S., Chintala, S., Chanan, G., Yang, E., Devito, Z., Lin, Z.,
  Desmaison, A., Antiga, L., and Lerer, A.
\newblock Automatic differentiation in {PyTorch}, 2017.

\bibitem[Skyrms(2010)]{referential-gt}
Skyrms, B.
\newblock \emph{Signals: Evolution, Learning, and Information}.
\newblock Oxford University Press, 2010.

\bibitem[Smith(2002)]{referential-evoling}
Smith, K.
\newblock The cultural evolution of communication in a population of neural
  networks.
\newblock \emph{Connection Science}, 14, 04 2002.
\newblock \doi{10.1080/09540090210164306}.

\bibitem[Spike et~al.(2017)Spike, Stadler, Kirby, and
  Smith]{referential-cogsci}
Spike, M., Stadler, K., Kirby, S., and Smith, K.
\newblock Minimal requirements for the emergence of learned signaling.
\newblock \emph{Cognitive Science}, 41\penalty0 (3):\penalty0 623--658, 2017.
\newblock \doi{https://doi.org/10.1111/cogs.12351}.
\newblock URL \url{https://onlinelibrary.wiley.com/doi/abs/10.1111/cogs.12351}.

\bibitem[Steels(2013)]{referential-alife}
Steels, L.
\newblock Self-organization and selection in cultural language evolution.
\newblock \emph{Experiments in Cultural Language Evolution}, 03 2013.
\newblock \doi{10.1075/ais.3.02ste}.

\bibitem[Strouse et~al.(2018)Strouse, Kleiman-Weiner, Tenenbaum, Botvinick, and
  Schwab]{strouse}
Strouse, D.~J., Kleiman-Weiner, M., Tenenbaum, J., Botvinick, M., and Schwab,
  D.
\newblock Learning to share and hide intentions using information
  regularization.
\newblock In \emph{Proceedings of the 32nd International Conference on Neural
  Information Processing Systems}, NIPS'18, pp.\  10270–10281, Red Hook, NY,
  USA, 2018. Curran Associates Inc.

\bibitem[Sukhbaatar et~al.(2016)Sukhbaatar, szlam, and Fergus]{ma-backprop}
Sukhbaatar, S., szlam, a., and Fergus, R.
\newblock Learning multiagent communication with backpropagation.
\newblock In Lee, D., Sugiyama, M., Luxburg, U., Guyon, I., and Garnett, R.
  (eds.), \emph{Advances in Neural Information Processing Systems}, volume~29.
  Curran Associates, Inc., 2016.
\newblock URL
  \url{https://proceedings.neurips.cc/paper/2016/file/55b1927fdafef39c48e5b73b5d61ea60-Paper.pdf}.

\bibitem[Sutton \& Barto(2018)Sutton and Barto]{sutton_reinforcement_2018}
Sutton, R.~S. and Barto, A.~G.
\newblock \emph{Reinforcement {Learning}: {An} {Introduction}}.
\newblock A Bradford Book, Cambridge, MA, USA, 2018.
\newblock ISBN 978-0-262-03924-6.

\bibitem[Tatikonda \& Mitter(2009)Tatikonda and Mitter]{markov_feedback2}
Tatikonda, S. and Mitter, S.
\newblock The capacity of channels with feedback.
\newblock \emph{IEEE Transactions on Information Theory}, 55\penalty0
  (1):\penalty0 323--349, 2009.
\newblock \doi{10.1109/TIT.2008.2008147}.

\bibitem[Wang \& Moayeri(1995)Wang and Moayeri]{fsmc}
Wang, H.~S. and Moayeri, N.
\newblock Finite-state markov channel-a useful model for radio communication
  channels.
\newblock \emph{IEEE Transactions on Vehicular Technology}, 44\penalty0
  (1):\penalty0 163--171, 1995.
\newblock \doi{10.1109/25.350282}.

\bibitem[Wang et~al.(2020)Wang, He, Yu, Qiu, An, and Rabinovich]{imac}
Wang, R., He, X., Yu, R., Qiu, W., An, B., and Rabinovich, Z.
\newblock Learning efficient multi-agent communication: An information
  bottleneck approach.
\newblock In \emph{ICML}, pp.\  9908--9918, 2020.
\newblock URL \url{http://proceedings.mlr.press/v119/wang20i.html}.

\bibitem[Ziebart et~al.(2008)Ziebart, Maas, Bagnell, and
  Dey]{ziebart_maximum_2008}
Ziebart, B.~D., Maas, A., Bagnell, J.~A., and Dey, A.~K.
\newblock Maximum entropy inverse reinforcement learning.
\newblock In \emph{Proceedings of the 23rd national conference on {Artificial}
  intelligence - {Volume} 3}, {AAAI}'08, pp.\  1433--1438, Chicago, Illinois,
  July 2008. AAAI Press.
\newblock ISBN 978-1-57735-368-3.

\end{thebibliography}
\bibliographystyle{icml2022}

\newpage
\onecolumn
\raggedbottom
\appendix

\section{Experimental Details}

For CodeGrid, we use a policy parameterized by neural network with two fully-connected layers of hidden dimension $64$, each followed by a ReLu activation \citep{nair_rectified_2010}. 
For CodePong and CodeCart, we use a convolutional encoder with three layers of convolutions (number of channels, kernel size, stride) as follows: ($32$,$8$,$4$), ($64$,$4$,$2$), ($64$,$3$,$1$). This is followed by a fully connected layer \citep{mnih_human-level_2015}. 
We use ReLu activations after each layer, note that we do not use any max-pooling. 
For CodeGrid and CodePong, layer weights are randomly initialized using PyTorch $1.7$ \citep{paszke_automatic_2017} defaults.
We used 200k training episodes CodeGrid and 2M training episodes for CodePong.
For CodeCart, we initialized weights according to an optimally-trained DQN policy included in \textit{rl-baselines3-zoo}\footnote{\url{https://github.com/DLR-RM/rl-baselines3-zoo}} and individually fine-tune for each $\beta$ for $15k$ steps.
For all environments, we used the Adam optimizer with learning rate $10^{-4}$, $\beta_1=0.9,\ \beta_2=0.999,\ \epsilon=10^{-8}$ and no weight decay.
The results in Figures \ref{fig:refgrid} and \ref{fig:cartpole_and_pong} are averages over 10 rollouts; for Figure \ref{fig:cartpole_and_pong}, uncertainty is shown with standard error.

\textbf{CodeGrid}

\begin{figure}[h!]
\centering
  \includegraphics[width=0.35\columnwidth]{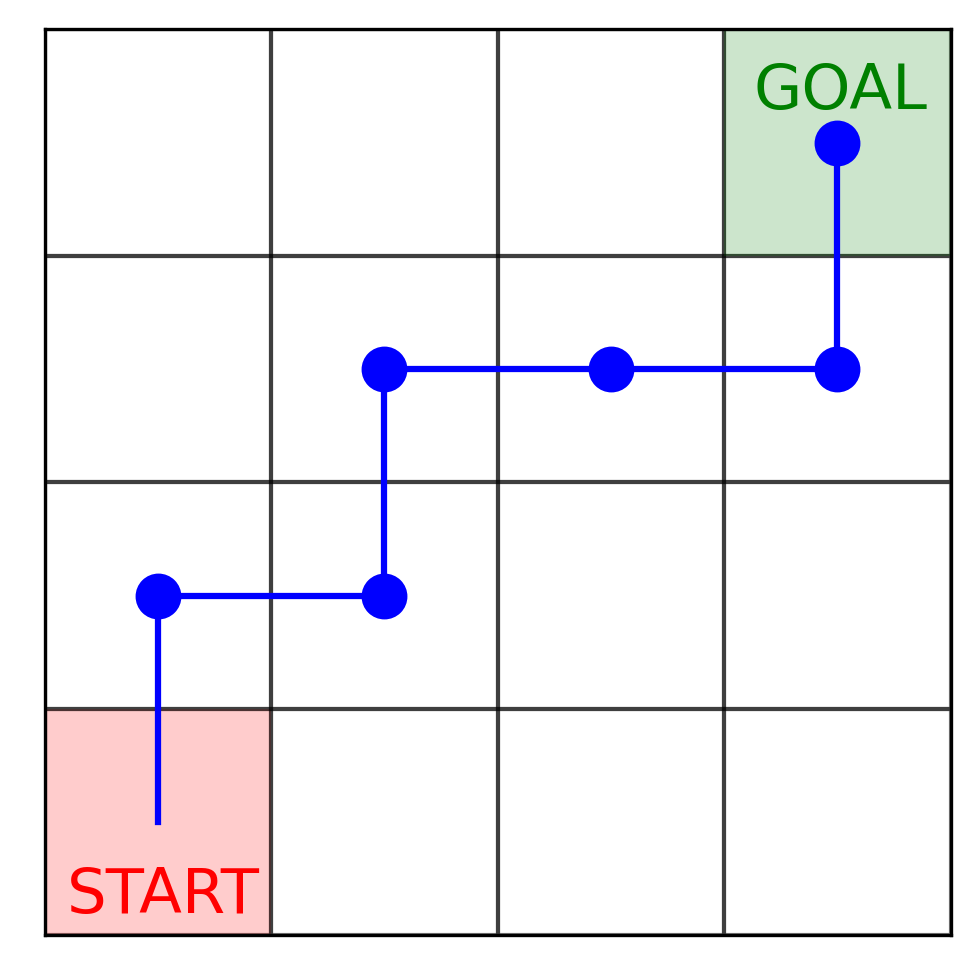}
  \hspace{0.5cm}
  \includegraphics[width=0.35\columnwidth]{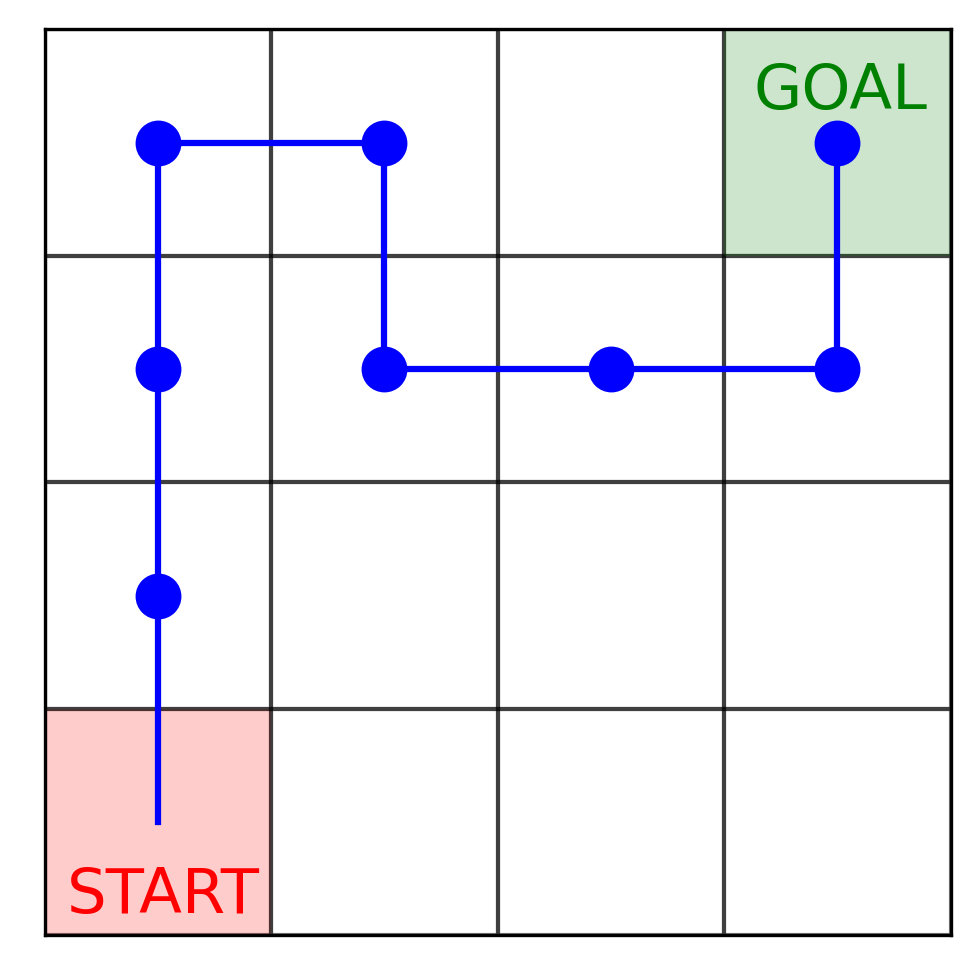}
  \caption{An illustration of two possible CodeGrid trajectories, one of length $6$ (left) and one of length $8$ (right). 
  }
\label{fig:refgrid2}
\end{figure}

\textbf{CodeCart}
For CodeCart we use the standard action space (move left and move right).
There are no redundant actions that the agent can use to communicate.

\textbf{CodePong}
For CodePong we use a reduced action space of size two (move up and move down).
There are no redundant actions that the agent can use to communicate.

\section{Algorithmic Details}

\subsection{Factored MEME}
\begin{algorithm}[H]
   \caption{Factored MEME}
   \label{alg:examplefactored}
\begin{algorithmic}
   \STATE // Step 1: Compute MDP policy
   \STATE {\bfseries Input:} MDP $G_{\text{MDP}}$, temperature $\alpha$
   \STATE MaxEnt MDP policy $\pi \gets \mbox{MaxEntRL}(G_{\text{MDP}}, \alpha)$
   \STATE
   \STATE // Step 2: Play Sender's Part of MCG episode
   \STATE {\bfseries Input:} MDP policy $\pi$, MCG $G_{\text{MCG}}$
   \STATE message $m \gets \text{reset}(G_{\text{MCG}})$ // observe message from environment
   \STATE belief $b \gets G_{\text{MCG}}.\mu$
   \STATE state $s \gets G_{\text{MCG}}.s^0$
   \WHILE{sender's turn}
        \STATE active block index $i \gets \mathop{\arg\max_j}\left\{\mathcal{H}(b_j) | b_j\in b\right\}$
        \STATE joint $\nu \gets \mbox{minimum\_entropy\_coupling}(b_i, \pi(s))$
        \STATE decision rule $\pi_{\mid M} \gets \nu(A \mid M)$
        \STATE sender action $a \sim \pi_{\mid M}(m_i)$
        \STATE new belief $b_i \gets \text{posterior\_update}(b_i, \pi_{\mid M}, a)$
        \STATE next state $s \gets G_{\text{MCG}}.\text{step}(a)$
   \ENDWHILE
   \STATE
   \STATE // Step 3: Play Receiver's Part of MCG episode
   \STATE {\bfseries Input:} MDP policy $\pi$, MCG $G_{\text{MCG}}$
   \STATE MDP trajectory $z \gets G_{\text{MCG}}.\text{receiver\_observation}()$
   \STATE belief $b \gets G_{\text{MCG}}.\mu$
   \FOR{$s, a \in z$}
    \STATE active block index $i \gets \mathop{\arg\max_j}\left\{\mathcal{H}(b_j) | b_j\in b\right\}$
    \STATE joint $\nu \gets \mbox{minimum\_entropy\_coupling}(b_i, \pi(s))$
    \STATE decision rule $\pi_{\mid M} \gets \nu(A \mid M)$
    \STATE new belief $b \gets \text{posterior\_update}(b, \pi_{\mid M}, a)$
   \ENDFOR
   \FOR{$i$}
   \STATE estimated $i$th message block $\hat{m}_i \gets \text{arg max}_{m'_i}b_i(m'_i)$
   \ENDFOR
   \STATE $G_{\text{MCG}}.\text{step}(\hat{m})$
\end{algorithmic}
\end{algorithm}

\subsection{RL+PR baseline}

\label{app:iql_pbs}
\begin{algorithm}[H]
  \caption{RL+PR baseline}
  \label{alg:iql_pbs}
\begin{algorithmic}
  \STATE // Play Episode
  \STATE {\bfseries Input:} MCG $G_{\text{MCG}}$
  \STATE state $s \gets s^0$
  \STATE message $m \gets \text{sample}(\mu)$
  \STATE belief $b \gets \mu$
  \WHILE {True}
  \STATE action $a \gets \text{sample}(\pi(s, m))$
  \STATE belief $b \gets \text{posterior\_update}(b, \pi, a)$
  \STATE new state $s' \gets \text{sample}(\mathcal{T}(s, a))$
  \IF{$s'$ non-terminal}
  \STATE $\text{add\_to\_buffer}(s, a, \mathcal{R}(s, a), s')$
  \STATE state $s \gets s'$
  \ELSE
  \STATE break
  \ENDIF
  \ENDWHILE
  \STATE $\text{add\_to\_buffer}(s, a, \mathcal{R}(s, a) + \zeta \max_{m' \in \mathcal{M}} b(m'), \emptyset)$
\end{algorithmic}
\end{algorithm}

\subsection{Min Entropy Joint Distribution Algorithm outputting a sparse representation of $M$ \cite{mec-appx}}
\label{sec:minent}

\begin{algorithm}[H]
   \caption{Min Entropy Joint Distribution}
   \label{alg:min_ent}
\begin{algorithmic}
   \REQUIRE prob. distributions $\mathbf{p}=\left(p_1,\dots,p_n\right)$ and $\mathbf{q}=\left(q_1,\dots,q_n\right)$
   \ENSURE A Coupling $\mathbf{M}=[m_{ij}]$ of $\mathbf{p}$ and $\mathbf{q}$ in sparse representation $\mathbf{L}=\left\{\left(m_{ij},(i,j)\right)|m_{ij}\neq 0\right\}$
   \STATE $\quad$
   \STATE \textbf{if} $\mathbf{p}\neq\mathbf{q}$, let $i=\max\left\{j|p_j\neq q_j\right\}$;  \textbf{if} $p_i<q_i$ \textbf{then swap } $\mathbf{p}\leftrightarrow\mathbf{q}$.
   \STATE $\mathbf{z}=(z_1,\dots,z_n)\leftarrow\mathbf{p}\land\mathbf{q}$, $\mathbf{L}\leftarrow\emptyset$
   \STATE CreatePriorityQueue$\left(\mathcal{Q}^{(row)}\right),\ \text{qrowsum}\leftarrow 0$
   \STATE CreatePriorityQueue$\left(\mathcal{Q}^{(col)}\right),\ \text{qcolsum}\leftarrow 0$
   \FOR{$i=n$ \textbf{downto} $1$}
    \STATE $z_i^{(d)}\leftarrow z_i,\ z_i^{(r)}\leftarrow 0$
    \IF{$qcolsum+z_i>q_i$}
        \STATE $\left(z_i^{(d)},z_i^{(r)},I,qcolsum\right)\leftarrow \text{Lemma3-Sparse}\left(z_i,q_i,\mathcal{Q}^{(col)},qcolsum\right)$
    \ELSE
        \WHILE{$\mathcal{Q}^{(col)}\neq\emptyset$}
            \STATE $(m,l)\leftarrow \text{ExtractMin}(\mathcal{Q}^{(col)})$,
            \STATE $qcolsum\leftarrow qcolsum-m$,
            \STATE $\mathbf{L}\leftarrow\mathbf{L}\cup\left\{(m,(l,i))\right\}$
        \ENDWHILE
    \ENDIF
    \IF{$qrowsum+z_i>p_i$}
        \STATE $(z_i^{(d)},\ z_i^{(r)},I,qrowsum)\leftarrow \text{Lemma3-Sparse}(z_i,p_i,\mathcal{Q}^{(row)},qrowsum)$
        \STATE \textbf{for each} $(m,l)\in I$ \textbf{do} $\textbf{L}\rightarrow\textbf{L}\cup\left\{(m,(i,l))\right\}$
        \STATE \textbf{if} $z_i^{(r)}>0$ \textbf{then} $\text{Insert}\left(\mathcal{Q}^{(row)},(z_i^{(r)},i)\right)$
        \STATE $qrowsum\leftarrow qrowsum + z_i^{(r)}$
    \ELSE
        \WHILE{$\mathcal{Q}^{(row)}\neq\emptyset$}
            \STATE $(m,l)\leftarrow \text{ExtractMin}(\mathcal{Q}^{(row)})$
            \STATE $qrowsum\leftarrow qrowsum-m$
            \STATE $\mathbf{L}\leftarrow\mathbf{L}\cup\left\{(m,(i,l))\right\}$
        \ENDWHILE
    \ENDIF
    \STATE $\mathbf{L}\leftarrow\mathbf{L}\cup\left\{(z_i^{(d)},(i,i))\right\}$
  \ENDFOR
\end{algorithmic}
\end{algorithm}

\begin{algorithm}
   \caption{Lemma$3$-Sparse}
   \label{alg:lemma3_sparse}
\begin{algorithmic}
   \REQUIRE real $z>0,\ x\geq0$, and priority queue $\mathcal{Q}$ s.t.$\left(\sum_{(m,l)\in\mathcal{Q}}m\right)=qsum$ and $qsum=x\geq z$ 
   \ENSURE $z^{(d)},z^{(r)}\geq 0$, and $I\subseteq\mathcal{Q}$ s.t. $z^{(d)}+z^{(r)}=z$, and $z^{(d)}+\sum_{(m,l)\in I}m=x$
   \STATE $I\leftarrow\emptyset,\ sum\leftarrow 0$
   \WHILE{$\mathcal{Q}\neq\emptyset$ \textbf{and} $sum+Min(\mathcal{Q})<x$}
    \STATE $(m,l)\leftarrow\text{ExtractMin}(\mathcal{Q}),\ qsum\leftarrow qsum - m$
    \STATE $I\leftarrow I\cup\left\{(m,l)\right\},\ z^{(r)}\leftarrow z-z^{(d)}$
   \ENDWHILE
   \STATE $z^{(d)}\leftarrow x -sum,\ z^{(r)}\leftarrow z-z^{(d)}$
   \STATE \textbf{return} $\left(z^{(d)},z^{(r)},I, qsum\right)$
\end{algorithmic}
\end{algorithm}

\end{document}